\documentclass[3p,twocolumn]{elsarticle}
\usepackage{generic}

\usepackage{algorithmic}
\usepackage{textcomp}

\usepackage{import}

\usepackage{verbatim}		
\usepackage{amsmath}
\usepackage{amssymb}

\usepackage{amsthm}
\usepackage{mathtools}	
\usepackage{grffile}	
\usepackage[tight,footnotesize]{subfigure}
\usepackage{microtype} 
\usepackage{color}
\usepackage{url}
\usepackage[ruled,vlined,linesnumbered]{algorithm2e}
\usepackage{bm} 
\usepackage{comment}
\usepackage{arydshln}

\usepackage{enumitem}
\usepackage{booktabs}

\usepackage{adjustbox}
\usepackage{tikz}
\usetikzlibrary{shadings, patterns, angles, quotes, arrows.meta, shapes, decorations.pathmorphing, decorations.shapes, decorations.text, positioning}
\usetikzlibrary{calc,intersections,arrows.meta}
\usepackage{pgfplots}

\usepackage{hyperref}
\hypersetup{
	colorlinks=true,
	linkcolor=black,
	citecolor=black,
}

	\tikzset{
	pil/.style={
		->,
		thick,
		shorten <=2pt,
		shorten >=2pt,}
}

\theoremstyle{remark}
\newtheorem{remarkx}{Remark}
\newenvironment{remark}
{\pushQED{\qed}\remarkx}
{\popQED\endremarkx}

\theoremstyle{definition}
\newtheorem{defn}{Definition}
\newtheorem{assump}{Assumption}
\newtheorem{problem}{Problem}
\newtheorem*{problem*}{Problem}

\theoremstyle{plain}
\newtheorem{theorem}{Theorem}
\newtheorem{lemma}{Lemma}

\newtheorem{prop}{Proposition}

\newcommand{\cmmnt}[1]{}

\newcommand{\scalemath}[2]{\scalebox{#1}{\mbox{\ensuremath{\displaystyle #2}}}}

\begin{document}


\begin{frontmatter}

\title{Singularity-Free Guiding Vector Fields on $\mathrm{SO}(3)$ with Designer-Specified Progression Behavior\tnoteref{t1}}

\tnotetext[t1]{The work of H.G.\ de Marina is supported by the \emph{Ram\'on y Cajal} grant RYC2020-030090-I from the Spanish Ministry of Science and by the ERC Starting Grant \emph{iSwarm} 101076091.}

\author[1]{Jes\'us Bautista Villar}
\ead{jesusbv@ugr.es}

\author[1]{Hector Garcia de Marina}
\ead{hgdemarina@ugr.es}

\affiliation[1]{organization={Department of Computer Engineering, Automation and Robotics, and Institute of Mathematics (IMAG), University of Granada},
                city={Granada},
                country={Spain}}


\begin{abstract}
This paper develops a singularity-free guiding vector field (SF-GVF) for path following on the special orthogonal group $\mathrm{SO}(3)$. First, we lift the Euclidean SF-GVF construction to $\mathrm{SO}(3)$, integrating the augmented-state approach with the intrinsic Lie-group geometry and obtaining a closed-form geometric guidance law whose integral curves converge to a designer-specified attitude path. The field is defined on a dense open subset of $\mathrm{SO}(3)$, excluding only the measure-zero antipodal set — a manifestation of the topological obstruction to continuous global stabilization on $\mathrm{SO}(3)$. The construction requires no per-step optimization and produces a control input intrinsically in $\mathfrak{so}(3)$ as body angular rates. Second, we formalize the \emph{progression behavior} along the path as a designer-supplied function $\nu(\xi)$, promoting the parametric speed from an implicitly resolved degree of freedom to a first-class design specification. In contrast to the Euclidean condition $v = 0$, which excludes vehicles with minimum-speed constraints, the corresponding condition $\omega = 0$ on $\mathrm{SO}(3)$ is physically admissible for most platforms with active attitude control, making the progression behavior a design freedom structurally available on $\mathrm{SO}(3)$ but absent in the Euclidean setting. The framework's structural results are established under a bi-invariant Riemannian metric and hold uniformly across choices of path, progression, and Lyapunov gain. The framework is illustrated in simulation on self-intersecting paths under both constant and point-convergence progression behaviors.
\end{abstract}
\begin{keyword}
Guiding vector fields \sep Path following \sep SO(3) \sep Lie groups \sep Nonlinear control
\end{keyword}
\end{frontmatter}



\section{Introduction} 
\label{sec: intro}

Path following on $\mathrm{SO}(3)$ --- guiding a rigid body's attitude toward and along a designer-specified reference curve, rather than tracking a time-parameterized trajectory --- arises naturally in aerospace, robotics, and space applications. Many such tasks do not require exact timing to be executed successfully: what matters is that the geometry is executed accurately, not that it is completed at a precise instant. 
A fixed-wing aircraft executing a loop, a spacecraft scanning a celestial region, a quadrotor performing an aerobatic figure, or a robotic end-effector following an orientation curve can all be assigned a strict time schedule, but in practice it is often immaterial whether the maneuver takes ten seconds or twelve, provided the geometry is right. 
Whenever exact timing is not part of the specification, path following serves as the natural framework --- not as a replacement for trajectory tracking, but as the appropriate alternative --- and motivates a distinct body of methodology grounded in the decoupling of geometry from temporality \cite{aguiar2008performance, kapitanyuk2017guiding}. However, the rate at which the path is traversed — the \emph{progression} — is typically left implicit, fixed by the guidance law or by the platform's kinematic envelope; one of the contributions of this paper is to treat it explicitly as a design specification on $\mathrm{SO}(3)$.

Among path-following approaches, guiding vector field (GVF) methods are particularly attractive: they provide a closed-form geometric guidance law whose integral curves converge to the desired path, without temporal commitment \cite{goncalves2010vector, kapitanyuk2017guiding, rezende2021constructive}. Extending GVFs from Euclidean space to $\mathrm{SO}(3)$ is, however, non-trivial. The compactness of $\mathrm{SO}(3)$, the topological obstruction to continuous global stabilization \cite{bhat2000topological}, and the absence of a global chart all conspire against direct generalization. Recent extensions to manifolds \cite{yao2023manifolds} and matrix Lie groups \cite{vinicius2026liegroups} have made progress, but neither incorporates the singularity-free augmentation of the singularity-free guiding vector field (SF-GVF) in \cite{yao2021singularity}, which combines a parametric description of the path with the lifting of the path parameter to circumvent topological obstructions in the Euclidean setting.

The first contribution of this paper is the construction of a singularity-free guiding vector field on $\mathrm{SO}(3)$, integrating the augmented-state approach of \cite{yao2021singularity} with the intrinsic geometry of the Lie group. The construction yields global convergence on the open chart $\mathcal{D} = \{\xi : \|\phi\| < \pi\}$, the natural domain of the logarithmic error coordinate, without per-step optimization, and with the control input intrinsically in $\mathfrak{so}(3)$ body-rates. The chart restriction $\|\phi\| < \pi$ is a direct manifestation of the topological obstruction \cite{bhat2000topological} and is the price of smooth time-invariant feedback; the construction is compatible with synergistic hybrid switching \cite{mayhew2011quaternion, mayhew2013synergistic, berkane2017hybrid, wang2021hybrid} for designers who require globally defined control laws.

The second contribution concerns the \emph{progression behavior} $\nu$, the rate at which the system advances along the reference path. A first-order GVF commands the platform's velocity: linear velocity $v$ in the Euclidean setting, body angular velocity $\omega$ on $\mathrm{SO}(3)$. The designer has freedom over the progression rate along the path in both settings; the \emph{scope} of that freedom depends on whether the commanded velocity can vanish while the platform remains physically feasible. For Euclidean SF-GVF, the condition $v=0$ is incompatible with vehicles that must maintain motion (fixed-wing aircraft, ground vehicles, marine surface vessels), restricting the designer to strictly positive progression rates. This excludes behaviors that require the system to stop, such as converging to and holding a specific configuration on the path. On $\mathrm{SO}(3)$, the corresponding condition $\omega = 0$ is admitted by most platforms with active attitude control, with spin-stabilized vehicles as a notable exception. The admissible design space therefore expands: the progression behavior $\nu$ can be specified as an arbitrary function of the state, enabling behaviors unavailable in the Euclidean case, including driving the system to a specific target attitude and holding it, synchronizing multiple agents at a common orientation, or throttling progression through demanding maneuvers. We characterize the conditions $\nu$ must satisfy and show that the framework's structural results hold for any smooth choice.

Our framework exposes four design elements: the parametric reference $R_\mathrm{ref}(\gamma)$ describing the path, the progression behavior $\nu(\xi)$ in closed loop with the system state $\xi$, the Lyapunov gain $K$, and the underlying Riemannian metric $M$. In this paper, however, the analysis is intentionally restricted to the bi-invariant metric $M=I$, which admits a closed-form logarithmic error and provides a particularly transparent analytical setting. Under this choice, the proposed guidance law satisfies the structural guarantees of convergence, invariance, and non-degeneracy for arbitrary choices of $R_\mathrm{ref}$, $K$, and admissible $\nu$.

The formulation nevertheless naturally accommodates general left-invariant metrics. Choosing $M\neq I$ provides a mechanism for shaping the convergence geometry, allowing the guidance law to favor platform-dependent criteria, such as control effort or actuation authority, instead of the minimum-angle geodesic induced by $M=I$. Such an extension lies beyond the scope of the present paper, as the structural analysis must be re-established once the geodesic structure and the Jacobian of the logarithmic error become metric dependent.

Correspondingly, the operational analysis assumes that the platform can execute the minimum-angle geodesic convergence induced by $M=I$, or that the initial attitude lies sufficiently close to the reference path; convergence transients incompatible with the platform's actuation envelope are thus ruled out by assumption rather than by construction. Within this scope, the proposed SF-GVF should be viewed as a \textit{guidance layer}. The designer specifies a reference path together with a progression behavior that are compatible with the intended platform, while the guiding vector field generates the corresponding body-rate command $\omega$. The standard guidance--control architecture in aerospace and robotics \cite{beard2012small, lee2010geometric, kapitanyuk2017guiding} separates this geometric guidance problem from the realization of the commanded rates through an inner-loop controller. Accordingly, the proposed framework is agnostic to the low-level implementation and can be combined with geometric controllers \cite{lee2010geometric}, incremental nonlinear dynamic inversion (INDI) \cite{smeur2016adaptive}, model predictive control (MPC) \cite{sun2022comparative}, or any controller capable of accurately tracking body angular velocities. This separation allows the same guidance law to be employed across platforms with markedly different dynamics and actuation mechanisms, provided the prescribed path and progression/convergence behavior are feasible.

\subsection{Related works}

\textbf{Trajectory tracking on $\mathrm{SO}(3)$.} A substantial body of work addresses trajectory tracking on $\mathrm{SO}(3)$ via Lyapunov methods on the Lie group \cite{bullo1999tracking, maithripala2006almost, lee2010geometric, maithripala2015intrinsic}. These works track a time-parameterized reference $R_\mathrm{ref}(t)$ produced upstream, using the geometric error $\phi = \log(R_\mathrm{ref}^\top R)^\vee$ and analogous Lyapunov constructions. The SF-GVF shares this geometric machinery but addresses a structurally different problem: trajectory tracking prescribes the time parameterization in open loop, whereas path following resolves the progression in closed loop with the state $\xi$ through the parametric coordinate $\gamma(\xi)$.

\textbf{Geometric extensions of GVFs.} \cite{yao2023manifolds} extended GVFs to Riemannian manifolds via zero-level-set characterization, with an attendant topological impossibility result for simple-closed paths. \cite{vinicius2026liegroups} extended the parametric construction of \cite{rezende2021constructive} to matrix Lie groups via a distance-function approach requiring per-step optimization. The present work occupies a distinct cell in this design space: a Lie-group setting via the singularity-free augmentation of \cite{yao2021singularity}, yielding global convergence on $\mathcal{D}$ without per-step optimization, with the control input intrinsically in $\mathfrak{so}(3) \cong \mathbb{R}^3$ rather than in the higher-dimensional embedding space of $\mathrm{SO}(3) \subset \mathbb{R}^{3\times 3}$.

\textbf{Progression behavior as a design variable.} The closest prior recognition of $\dot\gamma$ as a design variable is \cite{yao2022coord}, who use it as a consensus variable for multi-agent path-following coordination in Euclidean space. Our progression behavior $\nu(\xi)$, which specifies $\dot\gamma$ in closed loop with the state, generalizes this insight: consensus-based coordination is one instance, alongside the operational behaviors discussed in the introduction.

\subsection{Outline}
Section~\ref{sec: preliminaries} reviews $\mathrm{SO}(3)$, Riemannian metrics on Lie groups, and present the $\mathrm{SO}(3)$ path-following problem. Section~\ref{sec: sfgvf} constructs the SF-GVF and establishes convergence, invariance, and non-degeneracy. Section~\ref{sec: design} develops the construction of $R_\mathrm{ref}$, comparing three parametrizations. Section~\ref{sec: simulations} illustrates the framework via simulations on representative operational scenarios. Section~\ref{sec: conclusions} concludes.


\section{Preliminaries and Problem Statement}
\label{sec: preliminaries}

For a vector $u \in \mathbb{R}^3$, $\|u\|$ denotes its Euclidean norm. The operator $\times$ denotes the vector cross product, while dot product will be expressed as $u^\top v$, with $u,v\in\mathbb{R}^3$. $I$ is the $3 \times 3$ identity matrix.

\subsection{The Special Orthogonal Group $\mathrm{SO}(3)$}

The attitude of a rigid body in 3D space is fully described by three mutually orthogonal unit vectors, encoding how the body frame is oriented relative to a reference frame. Collecting these vectors as columns/rows of a matrix yields the most geometrically explicit representation of attitude: the \emph{rotation matrix}. The set of all such matrices forms the \emph{special orthogonal group}
\begin{equation}
\label{eq: SO3}
    \scalemath{0.95}{
    \mathrm{SO}(3) := \{R \in \mathbb{R}^{3 \times 3} \,:\, R^\top R = I,\, \det(R) = 1\}
    },
\end{equation}
which encodes all proper rotations in $\mathbb{R}^3$. Since $\mathrm{SO}(3)$ is simultaneously a smooth manifold and a group under matrix multiplication, it is a \emph{Lie group}.

Differentiating the orthogonality constraint $R^\top R = I$ shows that $R^\top \dot R$ is skew-symmetric,  so $\dot R = R S$ for some skew-symmetric $S \in \mathbb{R}^{3\times3}$. Thus the tangent space at $R$ of the $\mathrm{SO}(3)$ manifold is given by
\begin{equation*}
    \mathrm{T}_R\mathrm{SO}(3) := \{R S \, : \, S = - S^\top,\; S \in \mathbb{R}^{3\times 3}\}.
\end{equation*}

The \emph{Lie algebra} associated to $\mathrm{SO}(3)$ is the tangent space at the identity $\mathrm{T}_I\mathrm{SO}(3)$, i.e., the set of skew-symmetric matrices
\begin{equation} \label{eq: SO3_algebra}
    \mathfrak{so}(3) = \{ S \in \mathbb{R}^{3 \times 3} \,:\, S = - S^\top \},
\end{equation}
additionally equipped with the \emph{Lie bracket} $[\cdot,\cdot] : \mathfrak{so}(3) \times \mathfrak{so}(3) \rightarrow \mathfrak{so}(3)$, given by the matrix commutator $$[S_1,S_2] := S_1 S_2 - S_2 S_1.$$
Since every tangent space is obtained from $\mathrm{T}_I\mathrm{SO}(3)$ by left multiplication, i.e., $\mathrm{T}_R\mathrm{SO}(3) = R \,\mathfrak{so}(3)$, tangent vectors $\dot R \in \mathrm{T}_R\mathrm{SO}(3)$ may be identified with $\mathfrak{so}(3)$ elements through $S = R^\top \dot R$. This identification is known as the \emph{left-trivialization} and will be adopted throughout the paper for both kinematics and Riemannian metrics.

The space $\mathfrak{so}(3)$ is three-dimensional and isomorphic to $\mathbb{R}^3$ as a vector space. The \emph{hat map} ${}^\wedge : \mathbb{R}^3 \rightarrow \mathfrak{so}(3)$ provides the explicit $\mathfrak{so}(3)\cong\mathbb{R}^3$ isomorphism
\begin{equation*} \label{eq: hat_map}
    \tau^\wedge = \begin{bmatrix} 
    0 & -\tau_{z} & \tau_{y} \\ 
    \tau_{z} & 0 & -\tau_{x} \\ 
    -\tau_{y} & \tau_{x} & 0 
    \end{bmatrix}\in \mathfrak{so}(3),
    \quad
    \tau = \begin{bmatrix}\tau_x \\ \tau_y \\ \tau_z \end{bmatrix},
\end{equation*}
with inverse \emph{vee map} ${}^\vee : \mathfrak{so}(3) \rightarrow \mathbb{R}^3$, satisfying $(\tau^\wedge)^\vee = \tau$. Under this identification, $\tau^\wedge u = \tau \times u$ for any $u \in \mathbb{R}^3$, and the Lie bracket corresponds to the cross product, i.e., $[\tau_1^\wedge, \tau_2^\wedge] = (\tau_1 \times \tau_2)^\wedge$.

The \emph{adjoint map} $\mathrm{Ad}_R : \mathfrak{so}(3) \rightarrow \mathfrak{so}(3)$ describes how Lie algebra elements transform under a change of reference configuration on the group by conjugation:
\begin{equation} \label{eq: adjoint_map}
    \mathrm{Ad}_R(S) := R S R^\top.
\end{equation}
The infinitesimal counterpart of $\mathrm{Ad}_R$, obtained by differentiating at the identity, is the \emph{adjoint operator} $\mathrm{ad}_{S} : \mathfrak{so}(3) \rightarrow \mathfrak{so}(3)$:
\begin{equation*}
    \mathrm{ad}_{S_1}(S_2) := [S_1, S_2], \quad S_1, S_2 \in \mathfrak{so}(3),
\end{equation*}
which coincides with the Lie bracket. Under the $\mathfrak{so}(3)\cong\mathbb{R}^3$ isomorphism, \eqref{eq: adjoint_map} reduces to the standard vector rotation, i.e.
\begin{equation} \label{eq: adjoint_id}
    (\mathrm{Ad}_R(\tau^\wedge))^\vee = R\tau,
\end{equation}
and $\mathrm{ad}_{\tau_1^\wedge}(\tau_2^\wedge) = (\tau_1 \times \tau_2)^\wedge$.

Every element of $\mathfrak{so}(3)$ can be mapped to $\mathrm{SO}(3)$ via the \emph{exponential map} $\exp : \mathfrak{so}(3) \rightarrow \mathrm{SO}(3)$. Given $\tau \in \mathbb{R}^3$ with $\theta = \|\tau\|$, Rodrigues' formula gives
\begin{equation} \label{eq: exp}
    \exp(\tau^\wedge) = I + \frac{\sin\theta}{\theta}\,\tau^\wedge + \frac{1-\cos\theta}{\theta^2}\,(\tau^\wedge)^2.
\end{equation}
The inverse, the \emph{logarithmic map} $\log : \mathrm{SO}(3) \rightarrow \mathfrak{so}(3)$, recovers this element of the Lie algebra as
\begin{equation} \label{eq: log}
    \log(R) =
    \begin{cases}
        0_{3\times 3}, & R = I,\\[4pt]
        \dfrac{\theta}{2\sin\theta}(R - R^\top), & R \neq I,
    \end{cases}
\end{equation}
where $\theta = \arccos((\mathrm{tr}(R)-1)/2)$, so that $\|\log(R)^\vee\| = \theta$. The map $\log(R)$ is uniquely defined for $\theta \in [0,\pi)$ and becomes singular at $\theta = \pi$.

The exponential and logarithmic maps are smooth, and their differentials capture how infinitesimal changes in the Lie algebra $\delta \tau^\wedge \in \mathfrak{so}(3)$ relate to infinitesimal changes on the group $\delta R \in T_R\mathrm{SO}(3)$, with $R = \exp(\tau^\wedge)$. The \emph{right Jacobian} $J_r(\tau) : \mathbb{R}^3 \rightarrow \mathbb{R}^{3\times3}$ is defined as the linear map satisfying
\begin{equation} \label{eq: exp_dif}
\delta R = R \, (J_r(\tau)\delta\tau)^\wedge,
\end{equation}
and whose inverse $J_r^{-1}(\tau)$ encodes the differential of the logarithmic map, satisfying 
\begin{equation} \label{eq: log_dif}
\delta\tau = J_r^{-1}(\log(R)^\vee) \, (R^\top \delta R)^\vee.
\end{equation} 
To compute $J_r(\tau)$, one perturbs $\exp((\tau + \epsilon\delta\tau)^\wedge)$ and expand to first order in $\epsilon$ using the Baker-Campbell-Hausdorff formula. Collecting the first-order terms yields the series representation
\begin{equation} \label{eq: Jr_series}
    J_r(\tau) = \sum_{k=0}^{\infty} \frac{(-1)^k}{(k+1)!}\,\mathrm{ad}_{\tau^\wedge}^k,
\end{equation}
where $\mathrm{ad}_{\tau^\wedge}^k$ denotes the $k$-fold composition of $\mathrm{ad}_{\tau^\wedge}$.
For $\mathfrak{so}(3)$, the Cayley-Hamilton theorem guarantees that this series closes in finite form:
\begin{equation} \label{eq: Jr_closed}
    J_r(\tau) = I - \frac{1-\cos\theta}{\theta^2}\,\tau^\wedge + \frac{\theta-\sin\theta}{\theta^3}\,(\tau^\wedge)^2,
\end{equation}
where $\theta = \|\tau\|$, with inverse
\begin{equation} \label{eq: Jr_inv}
    J_r^{-1}(\tau) = I + \frac{1}{2}\,\tau^\wedge + \frac{1}{\theta^2}\!\left(1 - \frac{\theta}{2}\cot\frac{\theta}{2}\right)(\tau^\wedge)^2.
\end{equation}

For further background on $\mathrm{SO}(3)$ and Lie groups in control and robotics, we refer the reader to \cite{Bullo2005, so3_catalanes}.

\subsection{Riemannian metrics and geodesic distances on $\mathrm{SO}(3)$} 

Let $M\in\mathbb{R}^{3\times3}$ be symmetric positive definite. Through the
$\mathfrak{so}(3)\cong\mathbb{R}^3$ isomorphism, it defines an inner product on
$\mathfrak{so}(3) = T_I\mathrm{SO}(3)$,
\begin{equation}\label{eq: so3_inner_product}
    \langle S_1, S_2\rangle_I^M := (S_1^\vee)^\top M\, S_2^\vee,
    \quad S_1,S_2\in\mathfrak{so}(3).
\end{equation}

Left-translation extends \eqref{eq: so3_inner_product} to a Riemannian metric
on $\mathrm{SO}(3)$: for $X,Y\in T_R\mathrm{SO}(3)$,
\begin{equation}\label{eq: left_invariant_metric_M}
    \langle X, Y\rangle_R^M := \langle R^\top X,\, R^\top Y\rangle_I^M,
\end{equation}
with induced norm $\|X\|_R^M := \sqrt{\langle X, X\rangle_R^M}$. The
left-trivialization $X\mapsto R^\top X$ maps $T_R\mathrm{SO}(3)$ isomorphically
onto $\mathfrak{so}(3)$, so \eqref{eq: left_invariant_metric_M} is well defined
and left-invariant by construction.

A left-invariant metric is also right-invariant (hence \emph{bi-invariant}) if and only if the inner product on the algebra is \emph{Ad-invariant}:
\begin{equation}\label{eq: ad_invariant}
\scalemath{0.9}{\langle \mathrm{Ad}_R S_1,\, \mathrm{Ad}_R S_2 \rangle_I^M = \langle S_1, S_2 \rangle_I^M, \quad \forall R \in \mathrm{SO}(3)}.
\end{equation}
On $\mathfrak{so}(3)$ this forces $M \propto I$.

The induced \emph{geodesic distance} between $R_1,R_2 \in \mathrm{SO}(3)$ is
\begin{equation}\label{eq: so3_metric_geod_G}
d_{\mathrm{SO}(3)}^M(R_1, R_2) := \inf_{R(\cdot)} \int_0^1 \|\dot R(t)\|_{R(t)}^M \, \mathrm{d}t,
\end{equation}
the infimum over geodesics $R:[0,1]\rightarrow\mathrm{SO}(3)$ with $R(0) = R_1$ to $R(1) = R_2$. For general $M \succ 0$, geodesics satisfy Euler's rigid-body equation $M\dot\omega = (M\omega) \times \omega$ in body coordinates and admit no closed-form expression \cite[Chapter 4.4]{Bullo2005}. However, for the $M=I$ bi-invariant metric, geodesics through the identity are one-parameter subgroups $t \rightarrow \exp(tS)$, the Riemannian and matrix exponentials coincide, and \eqref{eq: so3_metric_geod_G} simplifies to
\begin{equation}\label{eq: so3_metric_geod}
d_{\mathrm{SO}(3)}(R_1, R_2) = \bigl\|\log(R_1^\top R_2)^\vee\bigr\|.
\end{equation}
the minimum rotation angle bringing $R_1$ to $R_2$.

\subsection{Attitude path specification} 

The attitude of a robot in 3D space is described by a rotation matrix
$R \in \mathrm{SO}(3)$ expressed in an inertial frame, with columns
$R = \begin{bmatrix} b_x & b_y & b_z \end{bmatrix}$ the orthonormal
body-fixed axes $b_x, b_y, b_z \in \mathbb{S}^2$.
Denoting the body angular velocity by $\omega \in \mathbb{R}^3$ and
$\omega^\wedge \in \mathfrak{so}(3)$, the attitude evolves under the
left-invariant kinematics
\begin{equation}\label{eq: Rdot}
    \dot{R} = R\omega^\wedge \in T_R\,\mathrm{SO}(3).
\end{equation}
Equivalently, in the inertial frame $\dot R = \omega_E^\wedge R$ with
$\omega_E = R\omega = (\mathrm{Ad}_R(\omega^\wedge))^\vee$, so $\mathrm{Ad}_R$ maps the
angular velocity from the body to the inertial frame.

The path-following task involves two design choices: \emph{which} attitude
path to follow and, since there is no temporal constraint, \emph{how} to traverse it. The path is specified by a reference curve $R_{\mathrm{ref}} : \mathbb{R} \to \mathrm{SO}(3)$, $\gamma \mapsto R_{\mathrm{ref}}(\gamma)$, parametrized by a scalar path variable $\gamma \in \mathbb{R}$ and chosen by the designer. We define the \emph{reference body angular velocity}
\begin{equation}\label{eq: omega_ref}
\omega_{\mathrm{ref}}(\gamma) := \big(R_{\mathrm{ref}}^\top(\gamma)\, R_{\mathrm{ref}}'(\gamma)\big)^\vee \in \mathbb{R}^3,
\end{equation}
where $(\cdot)' := \mathrm{d}/\mathrm{d}\gamma$, so that $R_{\mathrm{ref}}' = R_{\mathrm{ref}}\omega_{\mathrm{ref}}^\wedge$ mirrors \eqref{eq: Rdot}, with $\gamma$ replacing time. Along the reference curve, the chain rule then gives $\dot R_{\mathrm{ref}} = R_{\mathrm{ref}}\omega_{\mathrm{ref}}^\wedge \dot \gamma$.

\begin{assump}\label{assump: r_ref}
    The map $\gamma \mapsto R_{\mathrm{ref}}(\gamma)$ is smooth ($C^\infty$) and regular, i.e.\ $\omega_{\mathrm{ref}}(\gamma) \neq 0$ for all $\gamma \in \mathbb{R}$.
\end{assump}

Smoothness ensures existence and uniqueness of closed-loop solutions, while
regularity guarantees that $R_{\mathrm{ref}}$ is always in motion, so $\gamma$
is a valid progress variable with a well-defined nonzero tangent everywhere.

Deviations of the current attitude $R$ from the reference at parameter
$\gamma$ are quantified by the \emph{attitude error matrix}
$$
R_e(R, \gamma) := R_{\mathrm{ref}}(\gamma)^\top R \in \mathrm{SO}(3)
$$
and its
$\mathbb{R}^3$ image, the \emph{attitude error}
\begin{equation} \label{eq: phi}
    \phi(R, \gamma) := \log\big(R_e(R, \gamma)\big)^\vee \in \mathbb{R}^3.
\end{equation}
By \eqref{eq: so3_metric_geod}, $\|\phi(R,\gamma)\|
= d_{\mathrm{SO}(3)}\big(R_{\mathrm{ref}}(\gamma), R\big)$, so $\phi$ is the
geodesic error and vanishes if and only if $R = R_{\mathrm{ref}}(\gamma)$.

\begin{defn}[Reference attitude path]
    Given $R_{\mathrm{ref}}$ satisfying Assumption~\ref{assump: r_ref}, the
    \emph{reference attitude path} is
    \begin{equation} \label{eq: path_phy}
        \mathcal{P} := \big\{R \in \mathrm{SO}(3) \,:\,
        \exists\,\gamma,\ \phi(R, \gamma) = 0\big\},
    \end{equation}
    i.e.\ the image of the curve $R_{\mathrm{ref}}$.
\end{defn}

Since $\log$ is injective only away from rotations of angle $\pi$, the error
$\phi$ is smooth precisely on the open domain
\begin{equation}\label{eq: domain}
    \mathcal{D} := \big\{(R, \gamma) \in \mathrm{SO}(3) \times \mathbb{R}
    \,:\, \|\phi(R, \gamma)\| < \pi\big\},
\end{equation}
on which the geodesic distance is uniquely defined; all subsequent analysis is
restricted to $\mathcal{D}$.

\subsection{Progression behavior specification}
Reaching $\mathcal{P}$ is not sufficient, the attitude must \emph{traverse} it. The second design choice is therefore a \emph{progression behavior} $\nu : \mathcal{D} \to \mathbb{R}$, a map prescribing the rate at which $\gamma$ should advance along $R_{\mathrm{ref}}$. 

\begin{assump}\label{assump: nu}
The map $\nu: \mathcal D \to \mathbb{R}$ is continuously differentiable.
\end{assump}

\begin{remark}
The behavior $\nu$ encodes operational intent and can take diverse forms: constant traversal $\nu \equiv \nu_0$ for uniform motion along $R_{\mathrm{ref}}$, point-convergence $\nu(\xi) = -k(\gamma - \gamma^*)$ for driving the system to a target $R_{\mathrm{ref}}(\gamma^*)$, multi-agent synchronization via $\nu$ dependent on neighboring agents, or feasibility-aware throttling that slows progression through demanding sections of the path. The framework imposes no constraint on $\nu$ beyond Assumption \ref{assump: nu}.
\end{remark}

The pair $(R_{\mathrm{ref}}, \nu)$ thus encodes
the full specification: which path, and how it is traversed.

\subsection{Problem Statement} 

Following \cite{yao2021singularity}, we turn the path variable into a state to circumvent the topological obstruction that prevents global convergence on closed or self-intersecting paths via pure gradient methods.

Augmenting the kinematics \eqref{eq: Rdot} with $\dot\gamma = u_\gamma$:
\begin{equation} \label{eq: Rdot_ext}
    \dot R = R\omega^\wedge, \quad \dot \gamma = u_\gamma ,
\end{equation}
where $u_\gamma \in \mathbb{R}$ is a virtual control input. The augmented state
is $\xi := (R, \gamma) \in \mathrm{SO}(3) \times \mathbb{R}$, the input is
$\chi := [\omega^\top,\ u_\gamma]^\top \in \mathbb{R}^4$, and (writing
$\phi(\xi) = \phi(R,\gamma)$) the reference path $\mathcal{P}$ lifts to
\begin{equation} \label{eq: path_hgh}
    \scalemath{0.95}{
    \mathcal{P}^{\mathrm{hgh}} := \big\{\xi = (R, \gamma)
    \in \mathrm{SO}(3) \times \mathbb{R} \,:\, \phi(\xi) = 0\big\}
    }.
\end{equation}
As $\gamma$ is virtual, $\mathcal{P}^{\mathrm{hgh}}$ is not a physical attitude
set, but its projection onto $\mathrm{SO}(3)$ recovers $\mathcal{P}$; hence
convergence to $\mathcal{P}^{\mathrm{hgh}}$ implies convergence to
$\mathcal{P}$.

\begin{problem}[Path-following with progression behavior on $\mathrm{SO}(3)$] \label{prob: sfgvf}
    Given a reference curve $R_{\mathrm{ref}}$ satisfying
    Assumption~\ref{assump: r_ref}, with reference path
    $\mathcal{P} \subset \mathrm{SO}(3)$ and desired progression behavior $\nu$ satisfying Assumption~\ref{assump: nu}, design a
    continuously differentiable vector field $\chi : \mathcal{D} \to
    \mathbb{R}^4$, $\chi(\xi) = [\omega(\xi)^\top,\ u_\gamma(\xi)]^\top$,
    for the augmented system \eqref{eq: Rdot_ext} such that:
    \begin{enumerate}
        \item (Convergence) For all initial conditions $\xi_0 \in \mathcal{D}$, $\|\phi(\xi(t))\| \to 0$ as $t \to \infty$;
        \item (Path invariance) If $\xi_0 \in \mathcal{P}^{\mathrm{hgh}}$, then $\xi(t) \in \mathcal{P}^{\mathrm{hgh}}$ for all $t \geq 0$;
        \item (Non-degeneracy) $\chi(\xi) \neq 0$ for all $\xi \in \mathcal{D}$ with $\phi(\xi) \neq 0$ or
            $\nu(\xi) \neq 0$.
    \end{enumerate}
\end{problem}

\begin{remark}
    Conditions 2 and 3 of Problem \ref{prob: sfgvf} together govern the system's behavior on the path.
    Condition 2 ensures invariance: once $\phi = 0$, the attitude error remains zero. Condition 3 ensures structural well-posedness: zeros of $\chi$ correspond only to operational completion, when both the attitude error has converged ($\phi = 0$) and the behavior has come to rest ($\nu = 0$). If $\nu(\xi)\neq 0$ on $\mathcal{P}^{\mathrm{hgh}}$, condition 3 prevents the field from vanishing, ensuring that $\gamma$ continues to advance and the guiding point $R_{\mathrm{ref}}(\gamma(t))$ moves along $\mathcal{P}$.
\end{remark}


\section{SF-GVF for $\text{SO}(3)$ path following}
\label{sec: sfgvf}

\begin{figure}
    \centering
    \includegraphics[trim={0cm 0cm 0cm 0cm}, clip, width=1\columnwidth]{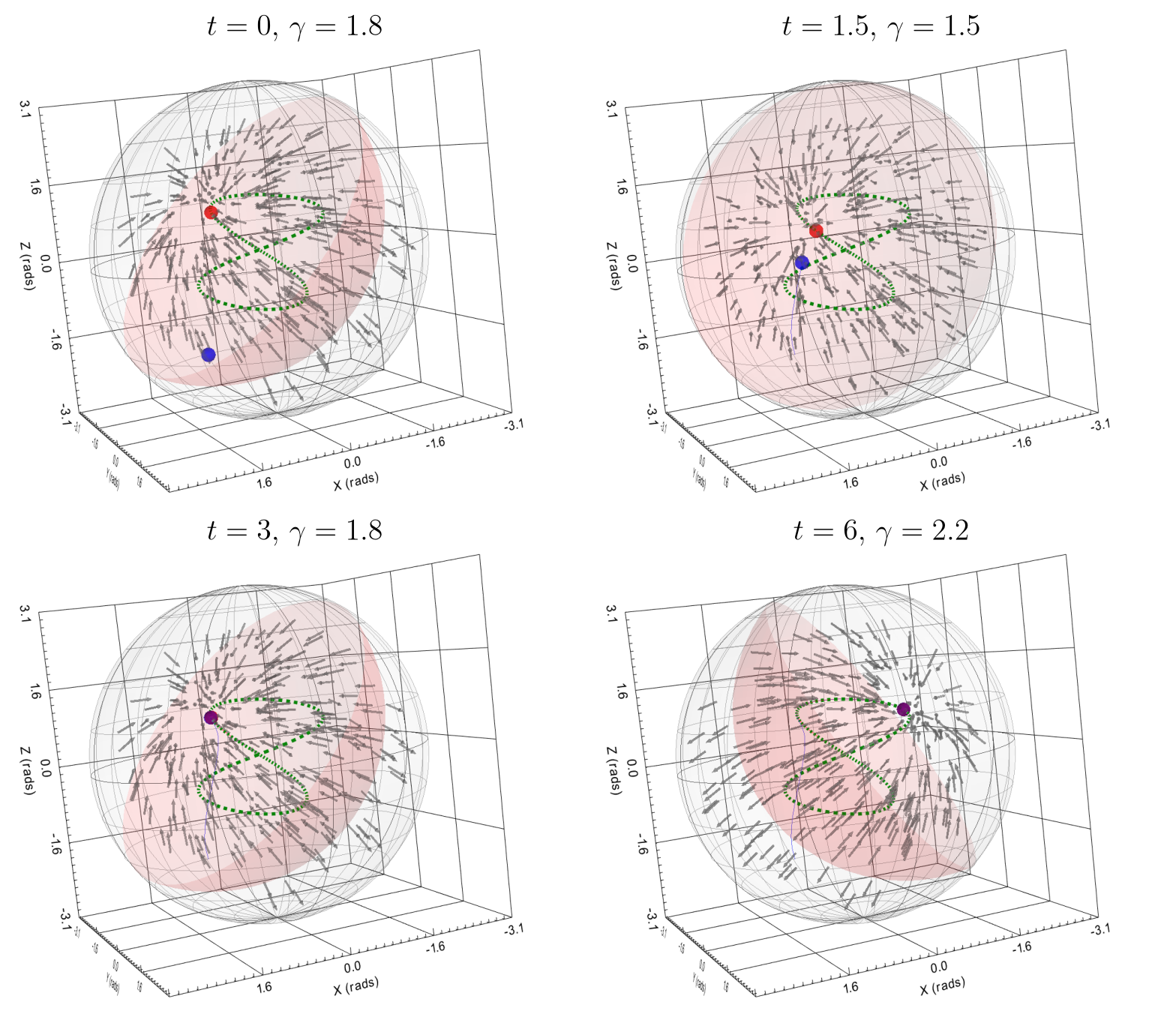}
    \caption{Visualization of the $\mathfrak{so}(3)$ component (the first three elements) of the $\mathrm{SO}(3)$ SF-GVF in \eqref{eq: gvf} for the Lissajous path (green dashed line) given by $\tau_\mathrm{ref}(\gamma) = [\sin(2\gamma),\, \sin(3\gamma),\, \sin(\gamma)]$, with $\nu = 0.1$ and $K=0.8I$. The $R$ exponential coordinates $\tau$ and reference curve coordinates $\tau_\mathrm{ref}(\gamma)$ are shown as blue and red dots, respectively. The grey ball is the $\mathbb{R}^3$ region where $\mathbb{R}^3 \cong \mathfrak{so}(3)$, while the red clipped ball is the singularity boundary where $\phi(\xi) = \pi$, i.e. $\xi \notin \mathcal{D}$.}
    \label{fig: gvf}
\end{figure}

To design a vector field on the input space that drives $\phi \rightarrow 0$, we first identify the linear map from $\chi$ to $\dot \phi$, which we call the \emph{error Jacobian}. Differentiating $\phi$ along \eqref{eq: Rdot_ext} and applying the differential of the logarithmic map \eqref{eq: log_dif}:
\begin{align*}
    \dot\phi &= J_r^{-1}(\phi) (R_e^\top \dot R_e)^\vee.
\end{align*}
The attitude error matrix evolves as 
$$\dot R_e = R_{\mathrm{ref}}^\top \dot R - R_{\mathrm{ref}}^\top \dot R_{\mathrm{ref}} (R_{\mathrm{ref}}^\top R) = R_e \omega^\wedge - \omega_{\mathrm{ref}}^\wedge \dot \gamma R_e.$$ 
Substituting and applying the adjoint identity \eqref{eq: adjoint_id}:
\begin{equation}\label{eq: phi_dot}
    \scalemath{0.95}{
    \dot\phi(\xi) = J_r^{-1}(\phi)\left[\omega - R_e(\xi)^\top \omega_\mathrm{ref}(\gamma) \dot\gamma\right] = J_\phi(\xi) \chi
    },
\end{equation}
with the \emph{error Jacobian} $J_\phi: \mathcal{D}\rightarrow\mathbb{R}^{3\times 4}$ given by
\begin{equation}\label{eq: J_phi}
    J_\phi(\xi) =
    J_r^{-1}(\phi)\Bigl[\,I \;\big|\; \underbrace{-R_e(\xi)^\top \omega_\mathrm{ref}(\gamma)}_{=: \,J_\gamma(\xi)}\,\Bigr].
\end{equation}
The matrix $J_\phi$ is the central object of our construction: it maps the input $\chi$ directly to the error rate $\dot\phi$. It is well-defined on $\mathcal{D}$ since $J_r^{-1}(\phi)$ is invertible for $\|\phi\|<\pi$, and depends on the path only through $R_\mathrm{ref}(\gamma)$ and $\omega_\mathrm{ref}$, which are quantities intrinsic to the curve on $\text{SO}(3)$.

The $\mathrm{SO}(3)$ SF-GVF we propose has the form
\begin{equation} \label{eq: gvf}
    \chi(\xi) =
    \underbrace{
    \begin{bmatrix}
    -J_\gamma(\xi) \\
    1
    \end{bmatrix}\nu(\xi)}_{\chi_t(\xi)}
    - J_\phi(\xi)^\top K \phi(\xi) \; \in \mathbb{R}^4,
\end{equation}
where $K \in \mathbb{R}^{3\times 3}$, $K\succ 0$, modulates the convergence rate.
The structure of \eqref{eq: gvf} mirrors the design of \cite{yao2021singularity} in Euclidean space: the tangential component $\chi_t$ lies in the kernel of $J_\phi$ and drives progress along the path, while the gradient-based component $- J_\phi^\top K \phi$ pulls the system toward $\mathcal{P}^\mathrm{hgh}$ (see \autoref{fig: gvf}). The precise sense in which these two components are decoupled is established by the following lemma.

\begin{lemma}[Tangential component] \label{lem: orth}
    The feedforward term $\chi_t(\xi)$ lies in the kernel of $J_\phi(\xi)$ for all $\xi \in \mathcal{D}$, i.e., $J_\phi(\xi)\chi_t(\xi) = 0$. Consequently, $\chi_t$ is orthogonal to the range of $J_\phi(\xi)^\top$, and the tangential and convergence terms in \eqref{eq: gvf} satisfy $\chi_t(\xi)^\top J_\phi(\xi)^\top K\phi(\xi) = 0$.
\end{lemma}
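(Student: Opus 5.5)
The argument is a direct block-matrix computation followed by the standard fact that the kernel of a matrix is orthogonal to the range of its transpose.

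\textbf{Kernel membership.} Using the block structure of the error Jacobian \eqref{eq: J_phi}, $J_\phi(\xi) = J_r^{-1}(\phi)\,[\,I \mid J_\gamma(\xi)\,]$, and the definition of $\chi_t$ in \eqref{eq: gvf}, we multiply blockwise:
\[
J_\phi(\xi)\chi_t(\xi) = J_r^{-1}(\phi)\bigl(I\cdot(-J_\gamma(\xi)) + J_\gamma(\xi)\cdot 1\bigr)\nu(\xi) = J_r^{-1}(\phi)\,0\,\nu(\xi) = 0 .
\]
This holds for every $\xi \in \mathcal{D}$. On $\mathcal{D}$ we have $\|\phi\|<\pi$, so the coefficient $\frac{1}{\theta^2}\bigl(1-\frac{\theta}{2}\cot\frac{\theta}{2}\bigr)$ in \eqref{eq: Jr_inv} is finite. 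It extends smoothly to $\theta=0$ with limit $1/12$. Hence $J_r^{-1}(\phi)$ is a well-defined finite matrix, and $J_\gamma$ is finite because it is a rotation applied to $\omega_{\mathrm{ref}}$. This finiteness is the only point needing care. In particular, invertibility of $J_r^{-1}$ is not required for kernel membership, only that it is well defined.

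\textbf{Orthogonality to the range.} For any $y\in\mathbb{R}^3$ we have $\chi_t^\top J_\phi^\top y = (J_\phi\chi_t)^\top y = 0$, so $\chi_t \perp \operatorname{range}(J_\phi^\top)$. Specializing to $y = K\phi(\xi)$ gives $\chi_t(\xi)^\top J_\phi(\xi)^\top K\phi(\xi)=0$, which is the last assertion of the lemma.

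\textbf{Obstacle.} I expect no real obstacle. The only subtle step is the well-definedness of $J_r^{-1}$ on $\mathcal{D}$, including at $\phi=0$, which is handled by the series \eqref{eq: Jr_series} and the removable singularity noted above.
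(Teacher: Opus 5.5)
Your proposal is correct and follows the paper's proof: the blockwise product $J_r^{-1}(\phi)\bigl(-J_\gamma\nu + J_\gamma\nu\bigr)=0$, followed by $\chi_t^\top J_\phi^\top K\phi = (J_\phi\chi_t)^\top K\phi = 0$. Your check that $J_r^{-1}(\phi)$ is finite on $\mathcal{D}$, including the removable singularity at $\theta=0$ with limit $1/12$, is a valid detail that the paper leaves implicit.
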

\begin{proof}
By definition of $J_\phi$ and $\chi_t$:
\begin{align*}
J_\phi(\xi)\chi_t(\xi) = J_r^{-1}(\phi)\bigl(-J_\gamma(\xi)\nu(\xi) + J_\gamma(\xi)\nu(\xi)\bigr) = 0,
\end{align*}
for all $\xi \in \mathcal{D}$. The orthogonality statement follows from $\chi_t^\top J_\phi^\top K\phi = (J_\phi\chi_t)^\top K\phi = 0$.
\end{proof}

\begin{remark}
    Lemma \ref{lem: orth} reveals that the feedforward angular velocity $-J_\gamma\nu = R_e^\top \omega_\mathrm{ref}\nu$ is the unique tangent direction (up to scaling by $\nu$) that places $\chi_t$ in $\ker\{J_\phi\}$. Geometrically, it transports the reference's body angular velocity into the current body frame via the adjoint of $R_e$, ensuring that following the reference at progression rate $\nu$ does not perturb the attitude error. Crucially, the orthogonality property holds independently of the gain matrix $K$: the kernel structure of $J_\phi$ is metric-independent. This is the $\mathrm{SO}(3)$ analogue in compact form of the orthogonality property in \cite[Lemma 1]{yao2021singularity}, where the propagation term is orthogonal to each gradient $\nabla\phi_i$ via the generalized cross product.
\end{remark}

\begin{prop}[Exponential convergence and path invariance] \label{prop: convergence}
    Under the SF-GVF \eqref{eq: gvf}, for every $\xi_0 \in \mathcal{D}$, the closed-loop trajectory $\xi(t)$ satisfies $\|\phi(\xi(t))\| \to 0$ exponentially as $t \to \infty$. Moreover, if $\xi_0 \in \mathcal{P}^{\mathrm{hgh}}$, then $\xi(t) \in \mathcal{P}^{\mathrm{hgh}}$ for all $t \geq 0$.
\end{prop}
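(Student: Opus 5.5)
The plan is a Lyapunov argument in the error coordinate $\phi$, organized around the closed-loop error dynamics that Lemma~\ref{lem: orth} provides. By \eqref{eq: phi_dot}, $\dot\phi = J_\phi(\xi)\chi(\xi)$. Lemma~\ref{lem: orth} gives $J_\phi\chi_t = 0$, so the progression term drops out entirely and
\begin{equation*}
\dot\phi = -J_\phi(\xi)J_\phi(\xi)^\top K\phi, \qquad J_\phi J_\phi^\top = J_r^{-1}(\phi)\bigl(I + J_\gamma J_\gamma^\top\bigr)J_r^{-\top}(\phi).
\end{equation*}
Everything below is therefore independent of $\nu$ and of $R_{\mathrm{ref}}$, apart from the positive semidefinite rank-one term $J_\gamma J_\gamma^\top$.

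I would take $V(\xi) = \tfrac12\phi^\top K\phi$. This gives $\dot V = -\|J_\phi^\top K\phi\|^2 \le -\sigma_{\min}\bigl(J_r^{-1}(\phi)\bigr)^2\,\|K\phi\|^2$, where I drop $J_\gamma J_\gamma^\top \succeq 0$. The key auxiliary computation is a uniform lower bound on the singular values of $J_r^{-1}(\phi)$ on $\mathcal D$, which I would obtain from \eqref{eq: Jr_closed}.
\begin{itemize}
\item Along $\phi$ one has $J_r(\phi)\phi = \phi$, so that singular value equals $1$.
\item On the plane orthogonal to $\phi$, $J_r$ acts as a scaled rotation with gain $\sin(\theta/2)/(\theta/2)\in(2/\pi,1]$ for $\theta<\pi$.
\end{itemize}
Hence $\|J_r(\phi)\|\le 1$ and $\sigma_{\min}(J_r^{-1})\ge 1$. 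This yields
\begin{equation*}
\dot V \le -\|K\phi\|^2 \le -\frac{2\lambda_{\min}(K)^2}{\lambda_{\max}(K)}\,V,
\end{equation*}
so $V$, and with it $\|\phi\|$, decays exponentially by Gr\"onwall. Path invariance then follows at once: if $\phi(\xi_0)=0$, then $V(\xi_0)=0$ and $\dot V\le 0$ force $V\equiv 0$. Equivalently, $\phi\equiv 0$ solves the closed-loop error equation, and uniqueness holds because $\chi$ is $C^1$ on $\mathcal D$ (Assumptions~\ref{assump: r_ref}--\ref{assump: nu}).

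The main obstacle is the forward invariance of $\mathcal D$, which the decay estimate silently assumes. I need $\|\phi(\xi(t))\|<\pi$ for all $t$, so that $\log$, $J_r^{-1}$ and the field stay well defined and the solution exists globally. For $K=kI$ this is immediate. Using $\phi^\top J_r^{-1}(\phi)=\phi^\top$ (since $\phi^\wedge\phi=0$), the function $W=\tfrac12\|\phi\|^2$ satisfies $\dot W = -k\,\phi^\top(I+J_\gamma J_\gamma^\top)\phi\le -2kW$. So $\|\phi\|$ is nonincreasing, trajectories stay in $\mathcal D$, and they remain in a compact set in the $R$-component. For general $K\succ0$ the sublevel sets of $\phi^\top K\phi$ need not lie inside the ball $\|\phi\|<\pi$, and $\dot W$ is not sign-definite. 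I would first try to show directly that $\tfrac{d}{dt}\|\phi\|\le 0$ near the boundary $\|\phi\|\to\pi$, using the blow-up of the $\cot(\theta/2)$ term in \eqref{eq: Jr_inv}. If that fails, I would state the result for initial conditions in the largest $V$-sublevel set contained in $\mathcal D$, which is all of $\mathcal D$ when $K\propto I$.

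One further point needs checking: $\gamma(t)$ cannot escape in finite time. This follows because $\dot\gamma = \nu + J_\gamma^\top J_r^{-\top}K\phi$, and the second term is bounded once $\phi$ is bounded, since $\|J_\gamma\| = \|\omega_{\mathrm{ref}}\|$. What remains is growth of $\nu$ in $\gamma$. For the example behaviors in the paper, $\nu$ is at most affine in $\gamma$, which rules out finite-time escape.
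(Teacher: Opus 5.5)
Your core argument is the paper's: the same $V_K=\tfrac12\phi^\top K\phi$, Lemma~\ref{lem: orth} to remove $\chi_t$, and $\dot V_K=-\|J_\phi^\top K\phi\|^2$. Where you go beyond the paper, you are right to. The paper infers exponential decay from the state-dependent rate $\lambda_K(\xi)=\lambda_{\min}(KJ_\phi J_\phi^\top K)$ without bounding it below. It also tacitly assumes that solutions stay in $\mathcal D$ and exist for all $t\ge 0$. Your bound $\|J_r(\phi)\|\le 1$ gives $\lambda_K(\xi)\ge\lambda_{\min}(K)^2$, hence a uniform rate. Your identity $\phi^\top J_r^{-1}(\phi)=\phi^\top$ correctly gives forward invariance of $\mathcal D$ when $K=kI$.

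The step that would fail is your first plan for general $K$. Nothing blows up at $\partial\mathcal D$: $\cot(\theta/2)\to 0$ as $\theta\to\pi$. Your own singular-value computation already gives $\|J_r^{-1}(\phi)\|<\pi/2$ on $\mathcal D$; the singularity of $J_r^{-1}$ sits at $\theta=2\pi$. Worse, the inequality $\tfrac{d}{dt}\|\phi\|\le 0$ near $\partial\mathcal D$ is false. Start from $\tfrac{d}{dt}\tfrac12\|\phi\|^2=-\phi^\top(I+J_\gamma J_\gamma^\top)J_r^{-\top}(\phi)K\phi$. Take $K=\mathrm{diag}(1,9,1)$ and $\hat n=(e_1+e_2)/\sqrt2$. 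At any state with $\phi=\theta\hat n$ and $J_\gamma=\hat n+2e_3$, a direct computation gives $\tfrac{d}{dt}\tfrac12\|\phi\|^2=\theta^2(4\theta-10)$, which is positive for $\theta>5/2$. To realize this, let $R_{\mathrm{ref}}(\gamma)=\exp(\gamma\,\omega_0^\wedge)$ with $\omega_0=-\exp(\pi\hat n^\wedge)(\hat n+2e_3)$. Then $J_\gamma=-\exp(-\phi^\wedge)\omega_0$, so the $\phi$-dynamics are autonomous and smooth up to and across $\|\phi\|=\pi$. At $\phi=\pi\hat n$ the rate above equals $\pi^2(4\pi-10)>0$. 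By continuity, trajectories from $\phi_0=\theta_0\hat n$ with $\theta_0$ close to $\pi$ reach $\|\phi\|=\pi$ in finite time and leave $\mathcal D$, where $\chi$ is undefined. So your fallback is forced. For general $K\succ0$ the statement needs the extra hypothesis $\phi(\xi_0)^\top K\phi(\xi_0)<\lambda_{\min}(K)\pi^2$, which covers all of $\mathcal D$ only when $K\propto I$. The paper's proof has exactly this gap.

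Your concern about $\gamma$ is likewise a missing hypothesis, not a technicality. The choice $\nu=\gamma^2$ satisfies Assumption~\ref{assump: nu}. For $\xi_0\in\mathcal P^{\mathrm{hgh}}$ it gives $\dot\gamma=\gamma^2$, which escapes in finite time, so even the path-invariance claim for all $t\ge0$ fails. Your bound on the correction term, however, needs $\sup_\gamma\|\omega_{\mathrm{ref}}(\gamma)\|<\infty$, which Assumption~\ref{assump: r_ref} does not provide. Use instead $\int_0^\infty\|J_\phi^\top K\phi\|^2\,dt\le V_K(\xi_0)$. By Cauchy--Schwarz, the correction term then shifts $\gamma$ by at most $\sqrt{T\,V_K(\xi_0)}$ on $[0,T]$. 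A linear-growth bound $|\nu(\xi)|\le c_0+c_1|\gamma|$ plus Gr\"onwall then gives global existence for any regular path. A minor point: the sign is $\dot\gamma=\nu-J_\gamma^\top J_r^{-\top}K\phi$.
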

\begin{proof}
Consider the $K$-weighted Lyapunov candidate $V_K(\xi) = \tfrac{1}{2}\phi(\xi)^\top K\,\phi(\xi)$. Along trajectories of \eqref{eq: Rdot_ext} with input $\chi = \chi_t - J_\phi^\top K\phi$, using \eqref{eq: phi_dot}:
\begin{align*}
\dot V_K &= \phi^\top K \dot\phi = \phi^\top K J_\phi \chi \\
&= \phi^\top K J_\phi \chi_t - \phi^\top K J_\phi J_\phi^\top K \phi.
\end{align*}
By Lemma~\ref{lem: orth}, $J_\phi\chi_t = 0$, hence $\phi^\top K J_\phi \chi_t = 0$, leaving
$$
\dot V_K = -\phi^\top K J_\phi J_\phi^\top K \phi = -\|J_\phi^\top K\phi\|^2.
$$
From \eqref{eq: J_phi}, $J_\phi J_\phi^\top = J_r^{-1}(I + J_\gamma J_\gamma^\top) J_r^{-\top} \succ 0$ on $\mathcal{D}$ (since $I + J_\gamma J_\gamma^\top \succ 0$ and $J_r^{-1}$ is invertible). Combined with $K \succ 0$, the matrix $K J_\phi J_\phi^\top K$ is symmetric positive definite. Let $\lambda_K(\xi) := \lambda_{\min}(K J_\phi J_\phi^\top K) > 0$. Then
$$
\dot V_K \leq -\lambda_K(\xi)\|\phi\|^2 \leq -\frac{2\lambda_K(\xi)}{\lambda_{\max}(K)} V_K,
$$
where the second inequality uses $V_K \leq \tfrac{1}{2}\lambda_{\max}(K)\|\phi\|^2$. This proves exponential convergence of $V_K$, and hence of $\|\phi\|$, to zero on $\mathcal{D}$. Invariance follows from $V_K(\xi_0) = 0 \Rightarrow V_K(\xi(t)) = 0$ for all $t \geq 0$.
\end{proof}

\begin{table*}
\centering
\caption{Comparison of path parametrizations for the SF-GVF interface.}
\label{tab: parametrizations}
\renewcommand{\arraystretch}{1.2}
\begin{tabular}{@{}lccccc@{}}
\toprule
\textbf{Parametrization} & \textbf{Specification} & \textbf{$R_\mathrm{ref}$ via} & \textbf{$\omega_\mathrm{ref}$ via} & \textbf{Singularity} & \textbf{Intuitive model} \\
\midrule
Euler angles (\S\ref{sec: path_euler}) 
    & $\eta(\gamma) \in \mathbb{R}^3$ 
    & Matrix product 
    & Eq.~\eqref{eq: omega_ref_euler} 
    & Gimbal lock 
    & Yaw/pitch/roll \\
Exp.\ coord. (\S\ref{sec: path_exp}) 
    & $\tau_\mathrm{ref}(\gamma) \in \mathbb{R}^3$ 
    & Single $\exp$ 
    & $J_r(\tau_\mathrm{ref})\dot\tau_\mathrm{ref}$ 
    & $\|\tau_\mathrm{ref}\|\!=\!\pi$ 
    & Axis + angle \\
Body rates (\S\ref{sec: path_rates}) 
    & $\omega_\mathrm{ref}(\gamma) \in \mathbb{R}^3$ 
    & Eq.~\eqref{eq: omega_to_R} 
    & Direct 
    & None 
    & Body rates \\
\bottomrule
\end{tabular}
\end{table*}

\begin{prop}[Non-degeneracy] \label{prop: no_sing}
    Under the SF-GVF \eqref{eq: gvf}, $\chi(\xi) = 0$ only if both $\phi(\xi) = 0$ and $\nu(\xi) = 0$.
\end{prop}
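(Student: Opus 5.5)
We argue by contraposition: assuming $\chi(\xi)=0$ at some $\xi\in\mathcal{D}$, we show that both $\phi(\xi)=0$ and $\nu(\xi)=0$. The plan is to take the orthogonal splitting of Lemma~\ref{lem: orth} and pair $\chi$ with each of its two pieces. Since $\chi_t\in\ker J_\phi$ and $J_\phi^\top K\phi\in\operatorname{range}(J_\phi^\top)=(\ker J_\phi)^\perp$, the two summands of \eqref{eq: gvf} are orthogonal in $\mathbb{R}^4$.

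\textbf{Step 1 (norm splitting).} Expand the squared norm:
\begin{equation*}
\|\chi\|^2=\|\chi_t\|^2+\|J_\phi^\top K\phi\|^2-2\,\chi_t^\top J_\phi^\top K\phi .
\end{equation*}
The cross term vanishes by Lemma~\ref{lem: orth}. Hence $\chi=0$ forces both $\chi_t=0$ and $J_\phi^\top K\phi=0$ separately.

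\textbf{Step 2 ($\nu=0$).} The last entry of $\chi_t$ is exactly $\nu(\xi)$, so $\chi_t=0$ gives $\nu(\xi)=0$ immediately. One could also read this off directly from the fourth component of $\chi$, namely $\nu - J_\gamma^\top J_r^{-\top}K\phi$. The splitting in Step 1 is what avoids having to handle the coupling between the two terms in that component.

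\textbf{Step 3 ($\phi=0$).} The proof of Proposition~\ref{prop: convergence} shows that $J_\phi J_\phi^\top=J_r^{-1}(I+J_\gamma J_\gamma^\top)J_r^{-\top}\succ0$ on $\mathcal{D}$. The point needing care is that this relies on $J_r^{-1}(\phi)$ being invertible for $\|\phi\|<\pi$. This holds because the eigenvalues of $J_r^{-1}$ are $1$ and $\tfrac{\theta}{2}\cot\tfrac{\theta}{2}\pm i\tfrac{\theta}{2}$, which stay nonzero for $\theta<\pi$ (alternatively, $J_r(\phi)$ in \eqref{eq: Jr_closed} is finite there). It follows that $J_\phi^\top$ is injective. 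Then $J_\phi^\top K\phi=0$ gives $K\phi=0$, and since $K\succ0$, $\phi=0$.

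Combining Steps 2 and 3, $\chi(\xi)=0$ implies $\phi(\xi)=0$ and $\nu(\xi)=0$. Equivalently, $\chi\neq0$ whenever $\phi\neq0$ or $\nu\neq0$, which is condition~3 of Problem~\ref{prob: sfgvf}.
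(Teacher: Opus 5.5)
Your proof is correct and follows the paper's argument. The paper rewrites $\chi=0$ as $\chi_t = J_\phi^\top K\phi$ and takes the inner product with $\chi_t$ to get $\|\chi_t\|^2=0$ via Lemma~\ref{lem: orth}, which is equivalent to your Pythagorean splitting; it then concludes with $J_\phi J_\phi^\top \succ 0$ and $K\succ 0$ exactly as in your Step 3. Your eigenvalue check of $J_r^{-1}$ makes explicit an invertibility fact the paper only asserts, and your argument is a direct proof of the stated implication rather than a contraposition.
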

\begin{proof}
Suppose $\chi(\xi) = 0$, i.e., $\chi_t(\xi) = J_\phi(\xi)^\top K \phi(\xi)$ for some $\xi \in \mathcal{D}$. Taking the inner product with $\chi_t(\xi)$ and applying Lemma~\ref{lem: orth}:
$$
\|\chi_t(\xi)\|^2 = \chi_t(\xi)^\top J_\phi(\xi)^\top K\phi(\xi) = 0.
$$
Hence $\chi_t(\xi) = 0$, and since the fourth component of $\chi_t$ is $\nu(\xi)$, it implies that $\nu(\xi) = 0$. With $\chi_t = 0$, $\chi = 0$ reduces to $J_\phi^\top K \phi = 0$. Left-multiplying by $J_\phi$:
$$
J_\phi(\xi) J_\phi(\xi)^\top K \phi(\xi) = 0.
$$
Since $J_\phi J_\phi^\top \succ 0$ on $\mathcal{D}$, we have $K\phi(\xi) = 0$. Since $K \succ 0$ is invertible, $\phi(\xi) = 0$.
\end{proof}

\begin{theorem}\label{thm: gvf}
The SF-GVF \eqref{eq: gvf} solves Problem~\ref{prob: sfgvf}.
\end{theorem}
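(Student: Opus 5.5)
The theorem follows by assembling the requirements of Problem~\ref{prob: sfgvf} from the results already established, plus a short regularity check that $\chi$ is an admissible continuously differentiable field on $\mathcal{D}$.

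\textbf{Regularity.} First we check that $\chi:\mathcal{D}\to\mathbb{R}^4$ in \eqref{eq: gvf} is $C^1$. On $\mathcal{D}$ we have $\|\phi\|<\pi$, so $R_e$ stays away from the cut locus. There $\log$, and hence $\phi(\xi)$, is smooth, and $J_r^{-1}(\phi)$ in \eqref{eq: Jr_inv} is smooth. This includes $\phi\to 0$, where the coefficient $\frac{1}{\theta^2}(1-\frac{\theta}{2}\cot\frac{\theta}{2})$ extends analytically with value $1/12$. It fails only at $\theta=2\pi$, which lies outside $\mathcal{D}$. By Assumption~\ref{assump: r_ref}, $R_\mathrm{ref}$ and $\omega_\mathrm{ref}$ are smooth, so $J_\gamma$ and $J_\phi$ are smooth on $\mathcal{D}$. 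Since $\nu$ is $C^1$ by Assumption~\ref{assump: nu}, $\chi$ is $C^1$, and closed-loop solutions exist and are unique locally.

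\textbf{Convergence and invariance.} Condition~1 is exactly the convergence part of Proposition~\ref{prop: convergence}, and condition~2 is its invariance part. The one subtlety is that Proposition~\ref{prop: convergence} implicitly assumes trajectories remain in $\mathcal{D}$ and exist for all $t\ge0$. I would make this explicit, and this is the main obstacle. Along solutions $\dot V_K\le 0$, so the sublevel set $\{\tfrac12\phi^\top K\phi\le V_K(\xi_0)\}$ is forward invariant. When $K\propto I$, this sublevel set is $\{\|\phi\|\le\|\phi(\xi_0)\|\}$, which stays strictly inside $\|\phi\|<\pi$, so the trajectory cannot reach $\partial\mathcal{D}$.

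For general $K\succ0$ the sublevel set of $V_K$ might not lie inside $\mathcal{D}$. Here I would argue directly on $W=\tfrac12\|\phi\|^2$ near the boundary, or otherwise restrict the claim to initial conditions inside the largest $V_K$-sublevel set contained in $\mathcal{D}$. If that is needed, I would state it as a caveat.

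\textbf{Finite escape.} We must also exclude escape of $\gamma$ in finite time, since $\gamma$ is unbounded. The right-hand side $u_\gamma=\nu-J_\gamma^\top J_r^{-\top}K\phi$ involves $J_\gamma=-R_e^\top\omega_\mathrm{ref}(\gamma)$. I would assume, or note as needed, that $\omega_\mathrm{ref}$ and $\nu$ are bounded; this holds for periodic paths and for the examples in the remarks. With $\phi$ confined to a compact set, $u_\gamma$ is then bounded, which rules out finite escape. With existence on $[0,\infty)$ secured, the exponential decay estimate of Proposition~\ref{prop: convergence} applies.

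\textbf{Non-degeneracy.} Condition~3 is the contrapositive of Proposition~\ref{prop: no_sing}. If $\chi(\xi)=0$, then $\phi(\xi)=0$ and $\nu(\xi)=0$. Hence $\chi(\xi)\neq0$ whenever $\phi(\xi)\neq0$ or $\nu(\xi)\neq0$.

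Combining regularity with conditions~1 to~3 gives the theorem. The only non-bookkeeping step is the forward invariance of $\mathcal{D}$ together with global existence of solutions.
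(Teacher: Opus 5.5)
Your skeleton is the paper's own. Its proof of Theorem~\ref{thm: gvf} consists only of two citations: Proposition~\ref{prop: convergence} for Conditions~1--2, and Proposition~\ref{prop: no_sing} (via its contrapositive) for Condition~3. Your regularity check is correct. You are also right that Proposition~\ref{prop: convergence} silently assumes solutions stay in $\mathcal{D}$ and exist on $[0,\infty)$, which the paper never addresses. Two of your repairs need adjusting, however, and both show that the theorem as literally stated needs an extra hypothesis.

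\textbf{Invariance of $\mathcal{D}$.} Your $K\propto I$ argument is fine. Your suggested fallback, arguing on $W=\tfrac12\|\phi\|^2$, cannot work for anisotropic $K$. Using $\phi^\top J_r^{-1}(\phi)=\phi^\top J_r^{-\top}(\phi)=\phi^\top$, the $\nu$-terms cancel and
\[
\dot W=-\phi^\top K\phi-(J_\gamma^\top\phi)\,(J_\gamma^\top J_r^{-\top}K\phi).
\]
For $K=kI$ the last term equals $-k(J_\gamma^\top\phi)^2\le 0$. Otherwise $J_r^{-\top}K\phi$ is generally not parallel to $\phi$. The term is then indefinite over admissible paths and quadratic in $\omega_\mathrm{ref}$, so $\|\phi\|$ can increase while $V_K$ decreases. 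What the Lyapunov argument actually delivers is forward invariance of $\{\phi^\top K\phi<\lambda_{\min}(K)\pi^2\}\subset\mathcal{D}$. Your caveat is therefore necessary, not optional. For general $K\succ0$, neither your argument nor the paper's gives Condition~1 on all of $\mathcal{D}$.

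\textbf{Finite escape.} Assuming bounded $\omega_\mathrm{ref}$ is unnecessary. Bounded $\nu$ does \emph{not} hold for the paper's examples as you claim: the point-convergence law $\nu=-k(\gamma-\gamma^*)$ used in Simulation~2 is unbounded. The last entry of $J_\phi^\top K\phi$ is $J_\gamma^\top J_r^{-\top}K\phi=\nu-u_\gamma$, so
\[
\dot V_K=-\|J_r^{-\top}K\phi\|^2-(u_\gamma-\nu)^2 .
\]
This gives $\int_0^T|u_\gamma-\nu|\,dt\le\sqrt{T\,V_K(\xi_0)}$ independently of $\omega_\mathrm{ref}$. A linear-growth bound $|\nu(\xi)|\le c_1+c_2|\gamma|$ then suffices, by Gr\"onwall together with compactness of $\mathrm{SO}(3)$ and the invariance above. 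Some such bound is genuinely needed. On $\mathcal{P}^{\mathrm{hgh}}$ the closed loop reduces to $\dot\gamma=\nu$. Hence $\nu=\gamma^2$, which is admissible under Assumption~\ref{assump: nu}, escapes in finite time, and Condition~2 fails as stated.

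Finally, the decay estimate you import from Proposition~\ref{prop: convergence} uses a state-dependent $\lambda_K(\xi)$, so it needs a uniform lower bound to give an exponential rate. Such a bound holds but the paper does not state it. $J_r(\tau)$ has singular values $1$ and $|\sin(\theta/2)|/(\theta/2)\le 1$. Hence $J_\phi J_\phi^\top\succeq J_r^{-1}J_r^{-\top}\succeq I$ and $\lambda_K\ge\lambda_{\min}(K)^2$.
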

\begin{proof}
Conditions~1 and~2 follow from Proposition~\ref{prop: convergence}; Condition~3 follows from Proposition~\ref{prop: no_sing}.
\end{proof}

\begin{remark}[Conventional analysis to extensions of $K$]\label{rem: extensions}
With the bi-invariant metric ($M = I$) fixed, yielding the closed-form geodesic distance \eqref{eq: so3_metric_geod}, and the isomorphism $\mathfrak{so}(3) \cong \mathbb{R}^3$, the error coordinate $\phi \in \mathbb{R}^3$ and its Jacobian $J_\phi \in \mathbb{R}^{3\times 4}$ are Euclidean objects. Consequently, extensions of Proposition~\ref{prop: convergence} to richer gain choices as state-dependent $K(\xi)$, error-dependent $K(\phi)$, barrier-Lyapunov constructions keeping trajectories away from $\partial\mathcal{D}$ \cite{tee2009barrier}, or adaptive tuning \cite{krstic1995nonlinear}, reduce to standard nonlinear-control analyses on $\mathbb{R}^3$ (see, e.g., \cite[Chapter 4]{khalil} for Lyapunov techniques and \cite{slotine1991applied} for adaptive methods). The kernel structure of Lemma~\ref{lem: orth} and the non-degeneracy property of Proposition~\ref{prop: no_sing} carry over verbatim, as they depend only on $J_\phi$ being injective on $\mathcal{D}$ and $K\succ 0$. We do not develop these extensions here; the contribution of this paper is the geometric construction and the structural design of the SF-GVF, on top of which conventional control-theoretic tools apply directly.
\end{remark}

\section{Construction of the reference path}
\label{sec: design}

The $\mathrm{SO}(3)$ SF-GVF proposed in \eqref{eq: gvf} is parametrization-agnostic: it accepts any reference pair $(R_\mathrm{ref}(\gamma),\,\omega_\mathrm{ref}(\gamma))$ satisfying Assumption~\ref{assump: r_ref}. In this section, we present and analyze three specification strategies for constructing such pairs, ordered from the most intuitive to the most singularity-robust (see \autoref{tab: parametrizations}).

\subsection{Intrinsic Euler-angle composition}\label{sec: path_euler}
The most intuitive parametrization. The designer specifies three smooth functions $\eta(\gamma) = [\eta_1(\gamma),\, \eta_2(\gamma),\, \eta_3(\gamma)]^\top$ corresponding to rotations about body axes $a_1, a_2, a_3 \in \{x, y, z\}$, with the path constructed as
\begin{equation}\label{eq: euler_composition}
    \scalemath{0.95}{R_\mathrm{ref}(\gamma) = R_0 R_{a_1}(\eta_1(\gamma)) R_{a_2}(\eta_2(\gamma))  R_{a_3}(\eta_3(\gamma))},
\end{equation}
where $R_a(\cdot)$ denotes the elementary rotation about body axis $e_a$, and $R_0 \in \mathrm{SO}(3)$ is a fixed initial reference. Differentiating \eqref{eq: euler_composition} along the path and applying $\omega^\wedge = R^\top \dot R$ yields
\begin{equation} \label{eq: omega_ref_euler}
\begin{aligned}
\omega_\mathrm{ref}(\gamma) &= R_{a_3}^\top R_{a_2}^\top e_{a_1}\, \dot\eta_1(\gamma) \\
&+ R_{a_3}^\top e_{a_2}\, \dot\eta_2(\gamma) + e_{a_3}\, \dot\eta_3(\gamma).
\end{aligned}
\end{equation}
where each axis is transported back through the rotations to its right into the final body frame. For the standard aerospace Z-Y-X (yaw-pitch-roll) convention with $(a_1, a_2, a_3) = (z, y, x)$ and $\eta = (\psi, \theta, \phi)$:
\begin{equation*}\label{eq: omega_ref_zyx}
    \omega_\mathrm{ref}(\gamma) = R_x^\top R_y^\top e_z\, \dot\psi + R_x^\top e_y\, \dot\theta + e_x\, \dot\phi.
\end{equation*}

\begin{remark}[Gimbal lock and the forward map]
    Euler-angle composition exhibits gimbal lock at the second-angle configuration (e.g., $\theta = \pm\pi/2$ for Z-Y-X), where the linear map $T(\eta): \dot\eta \mapsto \omega$ in \eqref{eq: omega_ref_euler} drops to rank 2. This affects only the inverse maps $R \to \eta$ and $\omega \to \dot\eta$; the forward construction used here evaluates exactly at all configurations. Once $R_\mathrm{ref}, \omega_\mathrm{ref}$ are produced, the SF-GVF operates intrinsically on $\mathrm{SO}(3) \times \mathbb{R}$ without inverting back to the chart, so gimbal lock does not propagate to the controller. The interface's only limitation is expressive: at gimbal lock, $\omega_\mathrm{ref}$ is confined to the 2D image of $T$, restricting the body angular velocities specifiable through Euler rates at this instant.
\end{remark}

\subsection{Exponential-coordinate composition} \label{sec: path_exp}
For rotations specified by a single axis and angle, the construction $R_\mathrm{ref}(\gamma) = R_0 \exp(\tau_\mathrm{ref}(\gamma)^\wedge)$ with $\tau_\mathrm{ref}: \mathbb{R} \to \mathbb{R}^3$ gives a geometrically transparent representation: $\|\tau_\mathrm{ref}\|$ is the rotation angle and $\hat\tau_\mathrm{ref}$ the axis. The body angular velocity is $\omega_\mathrm{ref}(\gamma) = J_r(\tau_\mathrm{ref}(\gamma))\,\dot\tau_\mathrm{ref}(\gamma)$ via the right Jacobian \eqref{eq: Jr_closed}. The chart restriction $\|\tau_\mathrm{ref}\| < \pi$ excludes rotations through the antipodal locus; trajectories spanning larger angular displacements are constructed by multi-segment composition $R_\mathrm{ref}(\gamma) = R_0 \prod_k \exp(\tau_k(\gamma)^\wedge)$, with $\|\tau_k(\gamma)\| \ll \pi$, and $\omega_\mathrm{ref}$ obtained by the chain rule. Free of gimbal lock; subject only to the antipodal restriction within each segment.

\subsection{Body-rate specification.} \label{sec: path_rates}
Fully singularity-free. The designer specifies $\omega_\mathrm{ref}: \mathbb{R} \to \mathbb{R}^3$ smoothly together with an initial reference $R_0 \in \mathrm{SO}(3)$, and $R_\mathrm{ref}(\gamma)$ is obtained by integrating the rotation flow on the group:
\begin{equation}\label{eq: omega_to_R}
    \frac{dR_\mathrm{ref}}{d\gamma} = R_\mathrm{ref}(\gamma)\,\omega_\mathrm{ref}(\gamma)^\wedge, \quad R_\mathrm{ref}(0) = R_0.
\end{equation}
The flow is well-defined globally on $\mathrm{SO}(3)$ since the right-hand side is smooth and linear in $R$; no chart-based representation is ever invoked, and no singularities of any kind arise. 

Equation \eqref{eq: omega_to_R} can be integrated offline using a Lie-group integrator. For example, with step size $\Delta\gamma$,
$$
R_{k+1}=R_k\exp(\Delta\gamma\,\omega_\mathrm{ref}(\gamma_k)^\wedge),
$$
which preserves the orthogonality constraint exactly. Higher-order Lie-group schemes such as Runge-Kuta or Magnus integrators may also be employed \cite{hairer2006gni}. The resulting trajectory $R_\mathrm{ref}(\gamma)$ is stored as sampled or interpolated data for runtime use, while $\omega_\mathrm{ref}(\gamma)$ is retained directly from the original specification.


\section{Simulations}
\label{sec: simulations}

This section numerically validates the $\mathrm{SO}(3)$ SF-GVF developed in Section~\ref{sec: sfgvf} on two representative scenarios. The first is a minimal example intended to make the framework's behavior visually intuitive: a single-axis rotation with constant progression, illustrating convergence from an arbitrary initial attitude onto a designer-specified path and steady-state traversal at the prescribed rate. The second is a more demanding scenario that exercises the design freedom in $\nu$: a self-intersecting Lissajous curve on $\mathrm{SO}(3)$, along which the system is required to converge to and stop at a designer-specified point on the path

Numerical integration on $\mathrm{SO}(3)$ is performed with the \texttt{Lie++} library \cite{fornasier2023msceq, fornasier2025equivariant}, embedded into the \texttt{ssl\_simulator} Python framework\footnote{\url{github.com/Swarm-Systems-Lab/ssl_simulator}} for simulation and visualization. All code reproducing the results in this section is publicly available.\footnote{\url{github.com/Swarm-Systems-Lab/gvf_so3}}

\subsection{Single-axis rotation with constant progression}

The reference path is a single-axis rotation $R_\mathrm{ref}(\gamma) = \exp\left([0,\, -\gamma,\, 0]^\wedge\right)$, generating a continuous pitch-axis loop parameterized by $\gamma$. The SF-GVF is applied with Lyapunov gain $K = 0.8I$ and constant progression behavior $\nu = 0.5\ \mathrm{rad/s}$. The initial attitude is $R(0) = \exp\left([-\pi/4,\, -\pi/3,\, \pi/3]^\wedge\right)$, deliberately far from the path.

Figure~\ref{fig: sim1} shows the resulting trajectory. Two features of the SF-GVF are visible in this run. First, the geometric convergence: the attitude error $\|\phi\|$ decreases monotonically from its initial value, and the system reaches the path along the geodesic (under the bi-invariant metric $M = I$) rather than along single coordinate directions. Second, the progression dynamics: the virtual coordinate $\gamma(t)$ initially advances slowly, effectively \emph{waiting} for the physical system to acquire the path, and only settles to the prescribed rate $\dot\gamma \to \nu = 0.5\ \mathrm{rad/s}$ once $\|\phi\|$ has converged. This waiting behavior is not imposed by hand, it is a structural consequence of the coupling between $R$ and $\gamma$ through $\chi(\xi)$ on the gradient-based component, and is what allows the framework to guarantee path invariance without requiring the initial condition to lie on the path.

\subsection{Point convergence on a self-intersecting Lissajous path}

The second scenario exercises the design freedom in $\nu$. The reference path is a Lissajous curve on $\mathrm{SO}(3)$,
$$
R_\mathrm{ref}(\gamma) = \exp\left([\sin(2\gamma),\, \sin(3\gamma),\, \sin(\gamma)]^\wedge\right),
$$
which self-intersects at the identity ($\gamma = 0 \bmod \pi$). Classical GVF constructions via implicit zero-level sets handle self-intersecting paths poorly, because the tangent direction is ambiguous at the crossing point and the vector field degenerates. The SF-GVF resolves this via the augmented state $\gamma$, which uniquely identifies the branch. The system is also required to converge to a specific point on the path, $\gamma^* = 3.5\ \mathrm{rad}$, and remain there. This is expressed through the progression behavior
$$
\nu(\xi) = -k_\nu\,(\gamma - \gamma^*), \quad k_\nu = 0.3,
$$
which drives $\gamma$ to $\gamma^*$ from any initial value. Together with the SF-GVF's convergence in $R$, this yields simultaneous point convergence in both $\gamma$ and $R$. We use $K = 0.5\,I$ and $R(0) = \exp\left([-\pi/3,\, \pi/3,\, 2\pi/3]^\wedge\right)$.

Figure~\ref{fig: sim2} shows the result. The trajectory acquires the path, traverses it in the direction dictated by the sign of $\nu$, and stops at $R_\mathrm{ref}(\gamma^*)$ with $\omega \to 0$. Two structural points are worth highlighting. First, this operational behavior is unavailable in Euclidean SF-GVF constructions where the parametric speed is implicitly bounded away from zero by minimum-velocity constraints; on $\mathrm{SO}(3)$, the admissibility of $\omega = 0$ makes it a natural target for $\nu$. Second, the SF-GVF's non-degeneracy result (Proposition~\ref{prop: no_sing}) predicts exactly this behavior: the guidance output $\chi(\xi) = 0$ if and only if both $\phi = 0$ and $\nu(\xi) = 0$, which occurs precisely at the target configuration $(R_\mathrm{ref}(\gamma^*), \gamma^*)$. The framework degenerates only at the designed stopping condition, not at incidental configurations.

\begin{figure*}
    \centering
    \includegraphics[width=1.95\columnwidth]{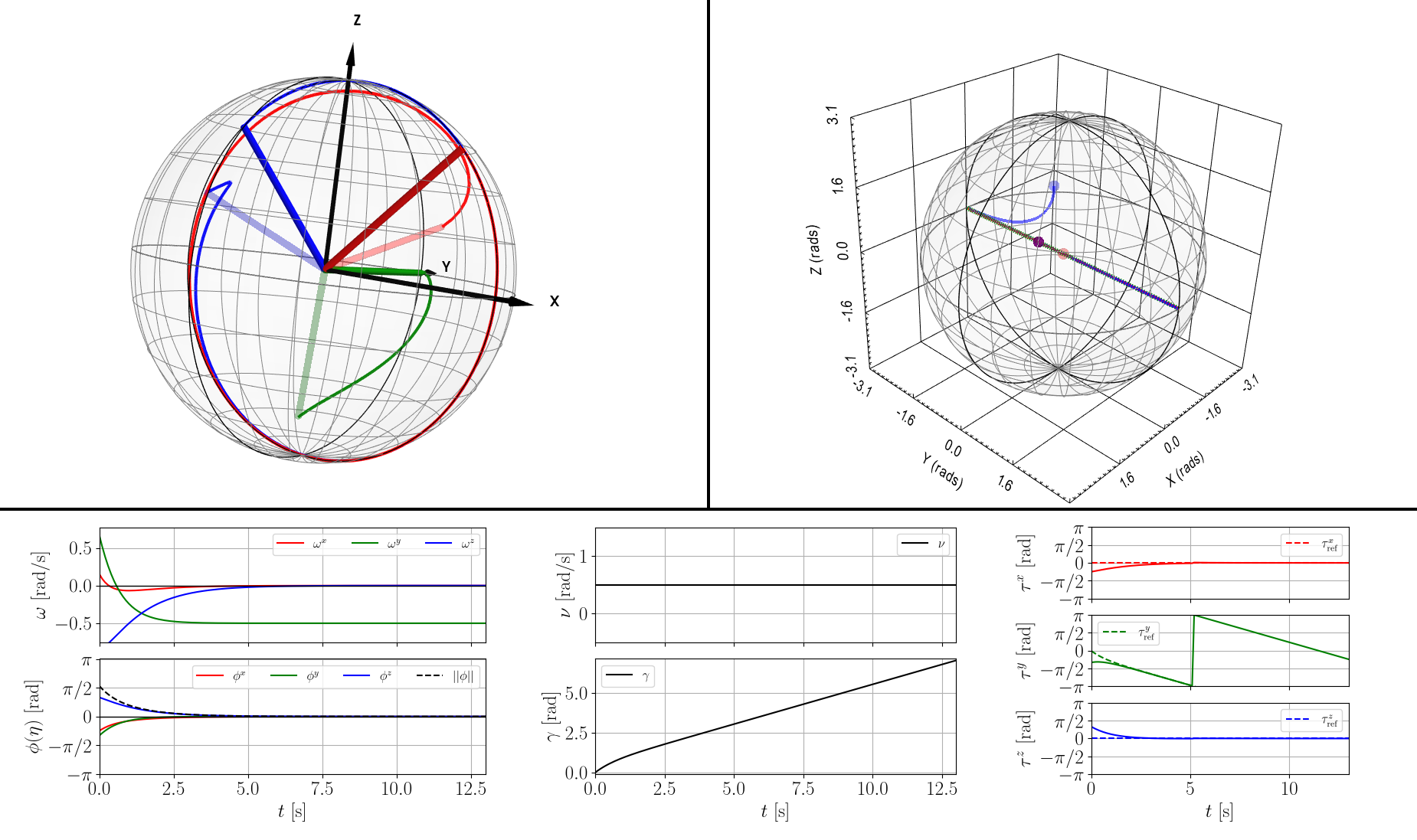}
    \caption{Single-axis rotation with constant progression ($\nu = 0.5\ \mathrm{rad/s}$, $K = 0.8\,I$, $R(0) = \exp([-\pi/4, -\pi/3, \pi/3]^\wedge)$). \textbf{Top left:} attitude $R(t)$ (opaque) and initial $R(0)$ (translucent), with body-axis trajectories on the unit sphere. \textbf{Top right:} $\mathfrak{so}(3)\cong\mathbb{R}^3$ representation via $\tau = \log(R)^\vee$: reference path $\tau_\mathrm{ref}(\gamma)$ (dashed green), reference point $\tau_\mathrm{ref}(\gamma(t))$ (red), physical trajectory $\tau(t)$ (blue). \textbf{Bottom left:} commanded rate $\omega(t)$ (top) and error $\phi(t), \|\phi(t)\|$ (bottom). \textbf{Bottom center:} progression $\nu(t)$ (top) and virtual coordinate $\gamma(t)$ (bottom); $\dot\gamma \to \nu$ after transient. \textbf{Bottom right:} time evolution of $\tau(t)$ (solid) and $\tau_\mathrm{ref}(\gamma(t))$ (dashed).}
    \label{fig: sim1}
\end{figure*}

\begin{figure*}
    \centering
    \includegraphics[width=1.95\columnwidth]{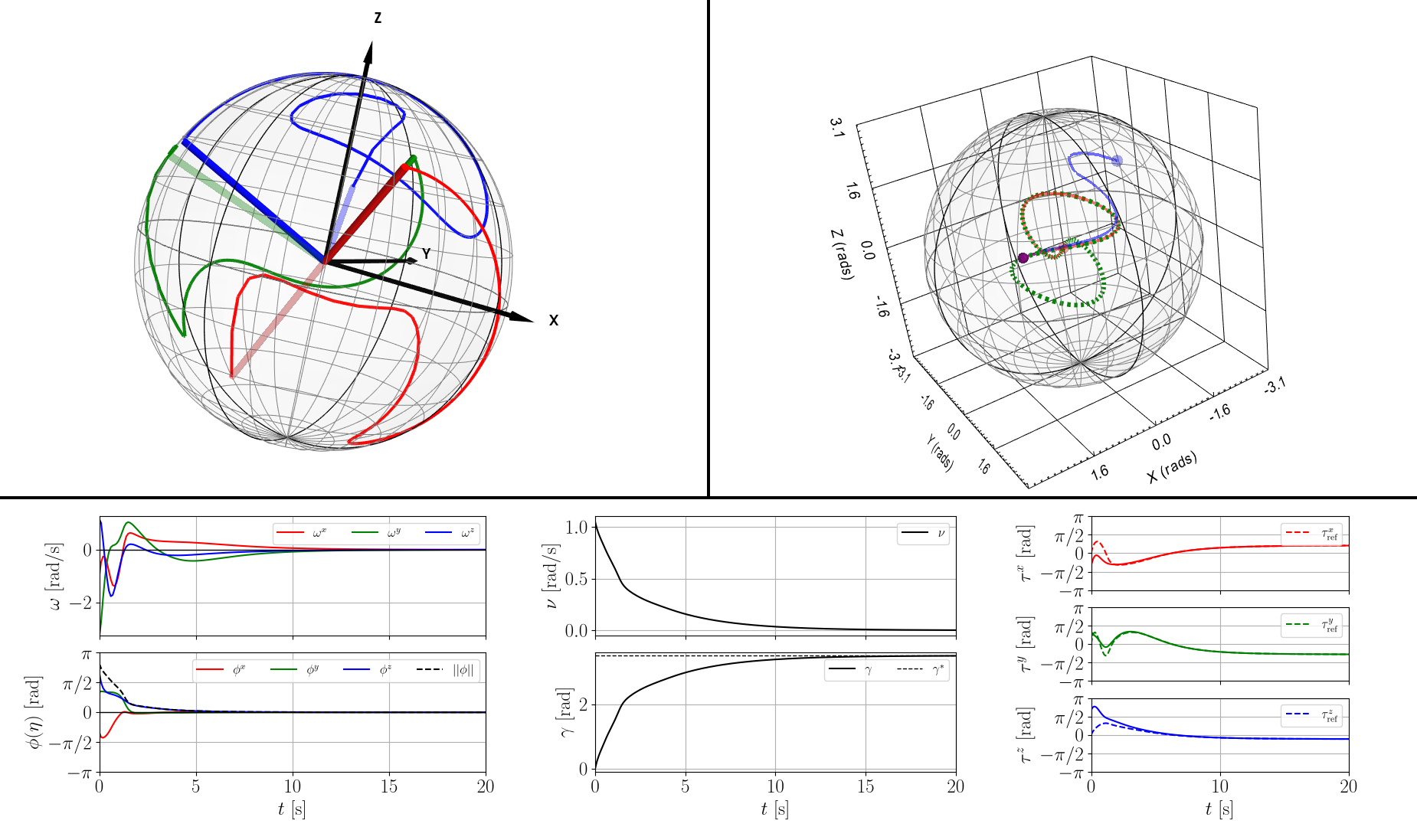}
    \caption{Point convergence on a self-intersecting Lissajous path with state-dependent $\nu(\xi) = -k_\nu(\gamma - \gamma^*)$ ($k_\nu = 0.3$, $\gamma^* = 3.5\ \mathrm{rad}$, $K = 0.5I$, $R(0) = \exp([-\pi/3, \pi/3, 2\pi/3]^\wedge)$). Panels as in Fig.~\ref{fig: sim1}. \textbf{Top right}: the reference $\tau_\mathrm{ref}(\gamma)$ self-intersects at the origin. \textbf{Bottom center}: dashed line marks the target $\gamma^*$; $\gamma \to \gamma^*$ and $\nu \to 0$ at steady state.}
    \label{fig: sim2}
\end{figure*}


\section{Conclusion}
\label{sec: conclusions}

This paper presented a singularity-free guiding vector field (SF-GVF) for path following on $\mathrm{SO}(3)$ by combining an augmented-state construction with the intrinsic geometry of the Lie group. The proposed framework generates body-rate commands directly in $\mathfrak{so}(3)$ without per-step optimization and introduces the progression law $\nu(\xi)$ as an explicit design variable, enabling behaviors such as target-attitude convergence, synchronization, and feasibility-aware path progression. Under the bi-invariant metric ($M=I$), the framework guarantees convergence and non-degeneracy while remaining independent of the specific path or progression design. Simulation results demonstrated both conventional attitude path following and the flexibility provided by state-dependent progression on complex trajectories.

Future work will extend the framework to general left-invariant metrics, allowing the geometry to be shaped according to platform dynamics, and will investigate experimental validation on fixed-wing UAVs using an INDI-based control architecture. Extensions to cooperative multi-agent attitude guidance and hybrid formulations for global feedback on $\mathrm{SO}(3)$ also constitute promising research directions.


\bibliographystyle{elsarticle-num}
\bibliography{biblio}

@article{tee2009barrier,
  author  = {Tee, Keng Peng and Ge, Shuzhi Sam and Tay, Eng Hock},
  title   = {Barrier {L}yapunov functions for the control of output-constrained nonlinear systems},
  journal = {Automatica},
  volume  = {45},
  number  = {4},
  pages   = {918--927},
  year    = {2009},
}

@book{khalil,
  author    = {Khalil, Hassan K.},
  title     = {Nonlinear Systems},
  publisher = {Prentice Hall},
  year      = {2002},
  edition   = {3rd},
  address   = {Upper Saddle River, NJ}
}

@book{krstic1995nonlinear,
  author    = {Krstic, Miroslav and Kokotovic, Petar V. and Kanellakopoulos, Ioannis},
  title     = {Nonlinear and Adaptive Control Design},
  publisher = {John Wiley \& Sons},
  year      = {1995},
  address   = {New York}
}

@book{slotine1991applied,
  author    = {Slotine, Jean-Jacques E. and Li, Weiping},
  title     = {Applied Nonlinear Control},
  publisher = {Prentice Hall},
  year      = {1991},
  address   = {Englewood Cliffs, NJ}
}

@book{beard2012small,
  title={Small unmanned aircraft: Theory and practice},
  author={Beard, Randal W and McLain, Timothy W},
  year={2012},
  publisher={Princeton university press}
}

@article{smeur2016adaptive,
  title={Adaptive incremental nonlinear dynamic inversion for attitude control of micro air vehicles},
  author={Smeur, Ewoud JJ and Chu, Qiping and De Croon, Guido CHE},
  journal={Journal of Guidance, Control, and Dynamics},
  volume={39},
  number={3},
  pages={450--461},
  year={2016},
  publisher={American Institute of Aeronautics and Astronautics}
}

@article{sun2022comparative,
  title={A comparative study of nonlinear mpc and differential-flatness-based control for quadrotor agile flight},
  author={Sun, Sihao and Romero, Angel and Foehn, Philipp and Kaufmann, Elia and Scaramuzza, Davide},
  journal={IEEE Transactions on Robotics},
  volume={38},
  number={6},
  pages={3357--3373},
  year={2022},
  publisher={IEEE}
}

@misc{vinicius2026liegroups,
  title={Constructive Vector Fields for Path Following in Fully-Actuated Systems on Matrix Lie Groups}, 
  author={Felipe Bartelt and Vinicius M. Gonçalves and Luciano C. A. Pimenta},
  year={2026},
  eprint={2602.21450},
  archivePrefix={arXiv},
  primaryClass={cs.RO},
}

@article{yao2023manifolds,
  author={Yao, Weijia and Lin, Bohuan and Anderson, Brian D. O. and Cao, Ming},
  journal={IEEE Transactions on Automatic Control}, 
  title={Topological Analysis of Vector-Field Guided Path Following on Manifolds}, 
  year={2023},
  volume={68},
  number={3},
  pages={1353-1368},
}

@article{yao2022coord,
  title={Guiding vector fields for the distributed motion coordination of mobile robots},
  author={Yao, Weijia and de Marina, H{\'e}ctor Garc{\'\i}a and Sun, Zhiyong and Cao, Ming},
  journal={IEEE Transactions on Robotics},
  volume={39},
  number={2},
  pages={1119--1135},
  year={2022},
  publisher={IEEE}
}

@article{yao2021singularity,
  title={Singularity-free guiding vector field for robot navigation},
  author={Yao, Weijia and de Marina, H{\'e}ctor Garcia and Lin, Bohuan and Cao, Ming},
  journal={IEEE Transactions on Robotics},
  volume={37},
  number={4},
  pages={1206--1221},
  year={2021},
  publisher={IEEE}
}

@article{rezende2021constructive,
  title={Constructive time-varying vector fields for robot navigation},
  author={Rezende, Adriano MC and Goncalves, Vinicius M and Pimenta, Luciano CA},
  journal={IEEE Transactions on Robotics},
  volume={38},
  number={2},
  pages={852--867},
  year={2021},
  publisher={IEEE}
}

@article{kapitanyuk2017guiding,
  title={A guiding vector-field algorithm for path-following control of nonholonomic mobile robots},
  author={Kapitanyuk, Yuri A and Proskurnikov, Anton V and Cao, Ming},
  journal={IEEE Transactions on Control Systems Technology},
  volume={26},
  number={4},
  pages={1372--1385},
  year={2017},
  publisher={IEEE}
}

@article{goncalves2010vector,
  title={Vector fields for robot navigation along time-varying curves in $ n $-dimensions},
  author={Goncalves, Vinicius M and Pimenta, Luciano CA and Maia, Carlos A and Dutra, Bruno CO and Pereira, Guilherme AS},
  journal={IEEE Transactions on Robotics},
  volume={26},
  number={4},
  pages={647--659},
  year={2010},
  publisher={IEEE}
}

@article{aguiar2008performance,
  title={Performance limitations in reference tracking and path following for nonlinear systems},
  author={Aguiar, A Pedro and Hespanha, Joao P and Kokotovi{\'c}, Petar V},
  journal={Automatica},
  volume={44},
  number={3},
  pages={598--610},
  year={2008},
  publisher={Elsevier}
}

@article{maithripala2015intrinsic,
  title = {An intrinsic {PID} controller for mechanical systems on {Lie} groups},
  author={Maithripala, DH Sanjeeva and Berg, Jordan M},
  journal={Automatica},
  volume={54},
  pages={189--200},
  year={2015},
  publisher={Elsevier}
}

@article{lee2010geometric,
  title = {Geometric tracking control of a quadrotor {UAV} on {SE(3)}},
  author={Lee, Taeyoung and Leok, Melvin and McClamroch, N Harris},
  booktitle={49th IEEE conference on decision and control (CDC)},
  pages={5420--5425},
  year={2010},
  organization={IEEE}
}

@article{maithripala2006almost,
  title={Almost-global tracking of simple mechanical systems on a general class of {Lie} groups},
  author={Maithripala, DH Sanjeeva and Berg, Jordan M and Dayawansa, Wijesuriya P},
  journal={IEEE Transactions on Automatic Control},
  volume={51},
  number={2},
  pages={216--225},
  year={2006},
  publisher={IEEE}
}

@article{bullo1999tracking,
  title={Tracking for fully actuated mechanical systems: a geometric framework},
  author={Bullo, Francesco and Murray, Richard M},
  journal={Automatica},
  volume={35},
  number={1},
  pages={17--34},
  year={1999},
  publisher={Elsevier}
}

@misc{so3_catalanes,
title={A micro Lie theory for state estimation in robotics}, 
author={Joan Solà and Jeremie Deray and Dinesh Atchuthan},
year={2021},
}

@book{hairer2006gni,
  author    = {Hairer, Ernst and Lubich, Christian and Wanner, Gerhard},
  title     = {Geometric Numerical Integration},
  edition   = {2},
  publisher = {Springer},
  year      = {2006}
}

@book{Bullo2005,
  author    = {F. Bullo and A. Lewis},
  title     = {Geometric Control of Mechanical Systems},
  volume    = {49},
  publisher = {Springer-Verlag},
  address   = {New York},
  year      = {2005},
  isbn      = {978-0-387-22442-0}
}

@article{bhat2000topological,
  title={A topological obstruction to continuous global stabilization of rotational motion and the unwinding phenomenon},
  author={Bhat, Sanjay P and Bernstein, Dennis S},
  journal={Systems \& control letters},
  volume={39},
  number={1},
  pages={63--70},
  year={2000},
  publisher={Elsevier}
}

@article{wang2021hybrid,
  title={Hybrid feedback for global tracking on matrix lie groups {SO(3)} and {SE(3)}},
  author={Wang, Miaomiao and Tayebi, Abdelhamid},
  journal={IEEE Transactions on Automatic Control},
  volume={67},
  number={6},
  pages={2930--2945},
  year={2021},
  publisher={IEEE}
}

@article{berkane2017hybrid,
  title={Hybrid global exponential stabilization on {SO(3)}},
  author={Berkane, Soulaimane and Abdessameud, Abdelkader and Tayebi, Abdelhamid},
  journal={Automatica},
  volume={81},
  pages={279--285},
  year={2017},
  publisher={Elsevier}
}

@article{mayhew2013synergistic,
  title={Synergistic Hybrid Feedback for Global Rigid-Body Attitude Tracking on {SO(3)}},
  author={Mayhew, Christopher G and Teel, Andrew R},
  journal={IEEE Transactions on Automatic Control},
  volume={58},
  number={11},
  pages={2730--2742},
  year={2013},
  publisher={IEEE}
}

@article{mayhew2011quaternion,
  title={Quaternion-based hybrid control for robust global attitude tracking},
  author={Mayhew, Christopher G and Sanfelice, Ricardo G and Teel, Andrew R},
  journal={IEEE Transactions on Automatic control},
  volume={56},
  number={11},
  pages={2555--2566},
  year={2011},
  publisher={IEEE}
}

@article{fornasier2025equivariant,
  title={Equivariant symmetries for inertial navigation systems},
  author={Fornasier, Alessandro and Ge, Yixiao and van Goor, Pieter and Mahony, Robert and Weiss, Stephan},
  journal={Automatica},
  volume={181},
  pages={112495},
  year={2025},
  publisher={Elsevier}
}

@article{fornasier2023msceq,
  title={Msceqf: A multi state constraint equivariant filter for vision-aided inertial navigation},
  author={Fornasier, Alessandro and van Goor, Pieter and Allak, Eren and Mahony, Robert and Weiss, Stephan},
  journal={IEEE Robotics and Automation Letters},
  volume={9},
  number={1},
  pages={731--738},
  year={2023},
  publisher={IEEE}
}




\end{document}